\definecolor{shadecolor}{RGB}{248,248,248}
\newenvironment{Shaded}{\begin{snugshade}}{\end{snugshade}}
\newcommand{\AttributeTok}[1]{\textcolor[rgb]{0.77,0.63,0.00}{#1}}
\newcommand{\CommentTok}[1]{\textcolor[rgb]{0.56,0.35,0.01}{\textit{#1}}}
\newcommand{\ConstantTok}[1]{\textcolor[rgb]{0.00,0.00,0.00}{#1}}
\newcommand{\ControlFlowTok}[1]{\textcolor[rgb]{0.13,0.29,0.53}{\textbf{#1}}}
\newcommand{\DecValTok}[1]{\textcolor[rgb]{0.00,0.00,0.81}{#1}}
\newcommand{\FloatTok}[1]{\textcolor[rgb]{0.00,0.00,0.81}{#1}}
\newcommand{\FunctionTok}[1]{\textcolor[rgb]{0.00,0.00,0.00}{#1}}
\newcommand{\NormalTok}[1]{#1}
\newcommand{\OtherTok}[1]{\textcolor[rgb]{0.56,0.35,0.01}{#1}}
\newcommand{\SpecialCharTok}[1]{\textcolor[rgb]{0.00,0.00,0.00}{#1}}
\newcommand{\StringTok}[1]{\textcolor[rgb]{0.31,0.60,0.02}{#1}}
\patchcmd\longtable{\par}{\if@noskipsec\mbox{}\fi\par}{}{}
\def\maxwidth{\ifdim\Gin@nat@width>\linewidth\linewidth\else\Gin@nat@width\fi}
\def\maxheight{\ifdim\Gin@nat@height>\textheight\textheight\else\Gin@nat@height\fi}
\def\fps@figure{htbp}
\title{Tight relative estimation in the mean of Bernoulli random variables}
\author{Mark Huber}
\date{}
\newtheorem{theorem}{Theorem}[section]
\theoremstyle{definition}
\theoremstyle{definition}
\theoremstyle{definition}
\theoremstyle{definition}
\theoremstyle{remark}
\begin{document}
\maketitle

\newcommand{\ind}{{\mathbbm{1}}}
\newcommand{\bbone}{{\mathbbm{1}}}
\newcommand{\bbP}{{\mathbb{P}}}
\newcommand{\bbE}{{\mathbb{E}}}

\hypertarget{abstract}{%
\paragraph*{Abstract}\label{abstract}}
\addcontentsline{toc}{paragraph}{Abstract}

Given a stream of Bernoulli random variables, consider the problem of
estimating the mean of the random variable within a specified relative error with a specified probability of failure. Until now, the Gamma Bernoulli Approximation Scheme (\textsf{GBAS}) was the method that accomplished this goal using the smallest number of average samples. In this work, a new method is introduced that is faster when the mean is bounded away from zero. The process uses a two-stage process together with some simple inequalities to get rigorous bounds on the error probability.

\hypertarget{introduction}{%
\section{Introduction}\label{introduction}}

\label{sec:1}
Suppose that \(B_1, B_2, \ldots\) are a stream of independent, identically distributed (iid) random variables that have a Bernoulli distribution. That is, for each \(i\), \({\mathbb{P}}(B_i = 1) = p\) and \({\mathbb{P}}(B_i = 0) = 1 - p\). Consider the problem of estimating \(p\).

This is one of the oldest problems in probability, and was the inspiration for the first Law of Large Numbers of Jacob Bernoulli \cite{Hu_Be1713}. There are numerous modern applications, including network analysis \cite{Hu_At2018}, testing sequence association \cite{Hu_ba1996}, testing if a population is in Hardy-Weinburg equilibrium \cite{Hu_Hu2006}, ray tracing \cite{Hu_Sh2014}, and many others \cite{Hu_Gu2002, Hu_Ar2010, Hu_Ko2008, Hu_Hu2009, Hu_Ol2010}.

The basic algorithm simply generates a fixed number of \(B_i\) values, say \(n\), and then estimates \(p\) using the sample average \(\hat p_n = (B_1 + \cdots + B_n) / n\). While both simple and unbiased, this method fails to have any guarantees on the relative error of the estimate. Taking a random number of samples can help create an estimate with such a guarantee.

The goal here will be, given parameters \(\epsilon > 0\) and \(\delta \in (0, 1)\), find an integrable stopping time \(T\) with respect to the natural filtration and an estimate
\begin{equation}
\hat p = \hat p(B_1, B_2, \ldots, B_T)
\end{equation}
such that the relative error is greater than \(\epsilon\) with probability at most \(\delta\). That is,
\begin{equation}
{\mathbb{P}}\left( \left| \frac{\hat p}{p} - 1 \right| > \epsilon \right) \leq \delta.
\end{equation}

This is also known as a \emph{randomized approximation scheme}, or \emph{ras} for short.

The time to find the estimate is dominated by the time needed to generate \(B_1, \ldots, B_T\). Hence the running time of the procedure is taken to be \({\mathbb{E}}(T)\), the average number of samples needed by the ras.

Returning to the sample average for the moment, the variance of \(\hat p_n\) is \(v_n = p(1 - p) / n\). The standard deviation is then \(\sqrt{v_n}\). To get the standard deviation to be \(\epsilon p\), \(n = (1 - p) \epsilon^{-2} p^{-1}\) samples are necessary. Using this together with the mean of medians technique \cite{Hu_JeVaVa1986, Hu_JeSi1989}, it is possible to get an ras for this sample average using
\begin{equation}
\Theta((1 - p)p^{-1} \epsilon^{-2} \ln(\delta^{-1})) 
\end{equation}
samples. Of course, here the Big-Theta notation (\(\Theta\)) is hiding the constant involved.

While the Central Limit Theorem (because it is only an asymptotic result) does not yield an ras, it does give a rough estimate of the number of samples required. A CLT approximation indicates that about \(2 (1 - p)p^{-1} \epsilon^{-2} \ln(2 \delta^{-1})\) samples should be necessary and sufficient to obtain an \((\epsilon, \delta)\)-ras.

This is a problem, because the number of samples needed has a factor of \((1 - p) / p\), and \(p\) is the very thing being estimated! Dagum, Karp, Luby, and Ross \cite{Hu_DaKaLuRo2000} found an elegant solution to this difficulty through an algorithm that will be called \textsf{DKLR} throughout this work.

Their idea was to use the Bernoulli samples to create iid geometric random variables. Say that \(G \sim \textsf{Geo}(p)\) if for all \(i \in \{1, 2, \ldots \}\), \({\mathbb{P}}(G = i) = p(1 - p)^{i - 1}\). Then it is straightforward to see that for
\begin{equation}
R = \inf\{r: B_1 + \cdots + B_r = 1\},
\end{equation}
then \(R \sim \textsf{Geo}(p)\).

The mean of \(R\) is \(1 / p\), and the variance is \((1 - p) / p^2\). Hence for the sample average of \(k\) iid draws \(R_1, \ldots, R_k\) with the same distribution as \(R\),
\begin{equation}
\frac{R_1 + \cdots + R_k}{k}
\end{equation}
has mean \(1 / p\) and standard deviation \((1 - p)^{1/2} / [k^{1/2} p]\).

That is, the standard deviation is already of the same order as the mean being estimated! The \textsf{DKLR} algorithm then runs as follows.

\hypertarget{section}{%
\paragraph*{\texorpdfstring{\textsf{DKLR}}{}}\label{section}}
\addcontentsline{toc}{paragraph}{\textsf{DKLR}}

\begin{enumerate}
\def\labelenumi{\arabic{enumi}.}
\item
  Given \(\epsilon\) and \(\delta\), set \(k\) equal to
  \[
   \lceil 1 + (1 + \epsilon)4(e - 2) \epsilon^{-2} \ln(2 \delta^{-1}) \rceil.
   \]
\item
  Use the Bernoulli stream to generate \(R_1, \ldots, R_k\) iid \(\textsf{Geo}(p)\).
\item
  Return
  \[
   \hat p_{\textsf{DKLR}} = \frac{k - 1}{R_1 + \cdots + R_k}.
   \]
\end{enumerate}

The choice of \(k\) in the first line comes from \cite{Hu_DaKaLuRo2000}, and is based on their analysis of the value needed to obtain an \((\epsilon, \delta)\)-ras. The use of \(k - 1\) in the numerator of the final line is a change from their algorithm, which used \(k\). The \(k - 1\) used here makes this algorithm more easily comparable to the improvement on the algorithm, \textsf{GBAS}, that is described later on in this section.

While this was the first ras for this problem, there are some drawbacks. First, the constant of \(4(e - 2) \approx 2.873\ldots\) is not close to the constant of 2 predicted by the CLT. Second, it does not take advantage of second order effects, that is, the tails of the geometric random variable decline in a similar fashion to normals, allowing for second order reductions in the run time that are significant. Third, it is biased.

To solve these issues, the author introduced the \emph{Gamma Bernoulli Approximation Scheme} (\textsf{GBAS}) \cite{Hu_Hu2017}. This algorithm took the reduction of Bernoullis to Geometrics one step further. An easy to show fact about Geometric random variables is the following. Suppose that \(A\) is an exponentially distributed random variable with rate \(\lambda > 0\) so that for all \(a > 0\), it holds that \({\mathbb{P}}(A > a) = \exp(-\lambda a)\). Write \(A \sim \textsf{Exp}(\lambda)\).

\begin{theorem}
\protect\hypertarget{thm:unnamed-chunk-1}{}\label{thm:unnamed-chunk-1}If \(G \sim \textsf{Geo}(p)\) and \(A_1, \ldots, A_G\) are iid exponential random variables of rate 1, then \(A_1 + \cdots + A_G \sim \textsf{Exp}(p)\).
\end{theorem}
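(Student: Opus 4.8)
The plan is to condition on the value of $G$ and then use the standard fact that a sum of $n$ iid rate-$1$ exponentials is a $\textsf{Gamma}(n,1)$ (Erlang) random variable, whose density on $(0,\infty)$ is $s^{n-1}e^{-s}/(n-1)!$. Concretely, with $S = A_1 + \cdots + A_G$ and the $A_i$ taken independent of $G$, the first step is to record the conditional density: given $G = n$, the partial sum $A_1 + \cdots + A_n$ has density $s^{n-1}e^{-s}/(n-1)!$.

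The second step is to uncondition via the law of total probability. For $s > 0$,
\[
f_S(s) = \sum_{n=1}^\infty \bbP(G = n)\,\frac{s^{n-1}e^{-s}}{(n-1)!} = \sum_{n=1}^\infty p(1-p)^{n-1}\,\frac{s^{n-1}e^{-s}}{(n-1)!}.
\]
The third step is the punchline: factor out $p\,e^{-s}$ and recognize the remaining series $\sum_{n \geq 1}[(1-p)s]^{n-1}/(n-1)!$ as the Taylor expansion of $e^{(1-p)s}$. The whole expression then collapses to $p\,e^{-s}e^{(1-p)s} = p\,e^{-ps}$, which is exactly the $\textsf{Exp}(p)$ density, finishing the argument.

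As a more transform-flavored alternative, I would instead compute a Laplace transform. Since the moment generating function of a rate-$1$ exponential is $(1-t)^{-1}$ for $t < 1$, conditioning on $G$ and using independence gives $\bbE[e^{tS}] = \bbE[(1-t)^{-G}]$, which is the probability generating function of $G$ evaluated at $z = (1-t)^{-1}$. For the geometric this generating function is $pz/[1 - (1-p)z]$, and the substitution $z = (1-t)^{-1}$ simplifies it to $p/(p-t)$, the MGF of $\textsf{Exp}(p)$; uniqueness of the MGF on a neighborhood of the origin then closes the proof.

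I do not expect a genuine obstacle here: the computation is routine, and the only point needing a word of care is the legitimacy of interchanging summation with the (conditional) density or expectation. In the density approach this is immediate from nonnegativity of all terms (Tonelli), and in the MGF approach one simply restricts to $t$ small enough that $(1-p)(1-t)^{-1} < 1$, i.e.\ $t < p$, so that the geometric series converges and the identity holds on an open interval about $0$ — precisely what the uniqueness theorem requires.
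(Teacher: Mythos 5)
Your argument is correct and complete. Note that the paper does not actually prove this theorem itself --- it simply cites \cite{Hu_Hu2017} for a proof --- so there is no in-paper argument to compare against. Your primary route (condition on \(G = n\), use the \(\textsf{Gamma}(n,1)\) density \(s^{n-1}e^{-s}/(n-1)!\), and collapse the resulting series \(p e^{-s}\sum_{n\ge 1}[(1-p)s]^{n-1}/(n-1)! = p e^{-ps}\)) is the standard textbook proof and matches the paper's convention that \(\textsf{Exp}(p)\) has density \(p e^{-ps}\); your MGF alternative, composing the exponential MGF \((1-t)^{-1}\) with the geometric generating function \(pz/[1-(1-p)z]\) to get \(p/(p-t)\) for \(t < p\), is equally valid, and you have correctly flagged the only two points requiring care (Tonelli for the series interchange, and convergence of the geometric series on a neighborhood of the origin for MGF uniqueness). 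No gaps.
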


(See, for instance, \cite{Hu_Hu2017} for a proof.)

For \(S \sim \textsf{Exp}(p)\), the mean of \(S\) is \(1 / p\) and standard deviation is also \(1 / p\). Moreover, for \(S_1, \ldots, S_k\) iid \(\textsf{Exp}(p)\), \(M = S_1 + \cdots + S_k\) has a gamma distribution with shape parameter \(k\) and rate parameter \(p\). Write \(M \sim \textsf{Gamma}(k, p)\).

The density of both \(M\) and \(1 / M\) (which has an inverse gamma distribution) are known precisely, and the mean of \(1 / M\) is \(p / (k - 1)\). So
\begin{equation}
\hat p_{\textsf{GBAS}} = (k - 1) / M
\end{equation}
is an unbiased estimator for \(p\).

In addition, gamma random variables are scalable. Multiplying a gamma distributed random variable by a constant results in a new gamma distributed random variable where the new rate constant is the old constant divided by the multiplication factor. Hence
\begin{equation}
\frac{p}{\hat p_{\textsf{GBAS}}} = \frac{p M}{k - 1} \sim \textsf{Gamma}(k, k - 1).
\end{equation}
That is to say, the relative error of the \textsf{GBAS} estimate does not depend on the value \(p\) that is being estimated! This makes it a straightforward numerical calculation to find the minimum value of \(k\) such that for \(X \sim \textsf{Gamma}(k, k - 1)\),
\begin{equation}
{\mathbb{P}}\left(\left|\frac{1}{X} - 1 \right| > \epsilon\right) \leq \delta.
\end{equation}

This allows the \textsf{GBAS} algorithm to immediately give an \((\epsilon, \delta)\)-ras.

\hypertarget{section-1}{%
\paragraph*{\texorpdfstring{\textsf{GBAS}}{}}\label{section-1}}
\addcontentsline{toc}{paragraph}{\textsf{GBAS}}

\begin{enumerate}
  \item Given \( \epsilon \) and \( \delta \), set \( k \) equal to the smallest value such that a \( \textsf{Gamma}(k, k - 1) \) random variable lies outside the interval \( [1 / (1 + \epsilon), 1 / (1 - \epsilon)] \) with probability at most \( \delta \).

  \item Use the Bernoulli stream to generate \( S_1, \ldots, S_k \) iid \( \textsf{Exp}(p) \).

  \item
Return
\[
\hat p_{\textsf{GBAS}} = \frac{k - 1}{S_1 + \cdots + S_k}.
  \]
  \end{enumerate}

This corrects most of the three deficiencies noted earlier for the \textsf{DKLR} algorithm. First, the value of \(k\) can be shown to be at most \(2 \epsilon^{-2} \ln(2 \delta^{-1})\) for \(\epsilon\) and \(\delta\) both less than 1. This follows directly from Theorem 1 of \cite{Hu_FeHuRu2019}. Second, the tails of a gamma decline much like those of a normal, and so the second order effects are retained. Third, the result is unbiased.

Unfortunately, one new deficiency has been added. For a stream of \(R_i\) that are iid \(\textsf{Geo}(p)\), the mean of \(R_1 + \cdots + R_k\) is \(1 / p\) while the standard deviation is the slightly smaller \((1 - p)^{1 / 2} / p\). For \(S_i\) iid \(\textsf{Exp}(p)\) the mean is \(1 / p\) while the standard deviation is exactly \(1 / p\).

From a Monte Carlo point of view, this means that the \textsf{DKLR} algorithm (if \(p\) was known ahead of time) should take about \(1 - p\) times the number of samples that the \textsf{GBAS} algorithm does. While this does not affect much if \(p\) is close to 0, for \(p\) in the middle of the interval \([0, 1]\) this can have a significant effect on the running time. Of course, \(p\) is not known ahead of time, and so there has been no way of getting this \(1 - p\) factor in the run time exactly.

This paper for the first time presents a way of running \textsf{DKLR} close to optimally in order to regain a factor that is asymptotically close for small \(\epsilon\) and \(\delta\) to the \(1 - p\) factor in the run time. The rest of the paper is organized as follows. The next section introduces the two-stage method of running and proves the veracity of the estimate. The following section covers how to gain most of the advantage of the \(1 - p\) factor while still obtaining an unbiased estimate.
The last section then concludes.

\hypertarget{the-two-stage-algorithm}{%
\section{The two-stage algorithm}\label{the-two-stage-algorithm}}

Suppose that \(U_1, U_2, \ldots\) forms an iid sequence of random variables that are uniform over \([0, 1]\). Suppose \(B_i = {\mathbbm{1}}(U_i \leq p)\), where \({\mathbbm{1}}\) is the indicator function that evaluates to 0 if the argument is false and 1 if it is true. Then the \(B_1, B_2, \ldots\) form an iid Bernoulli sequence with mean \(p\). Let
\[
T_k(p) = \inf\{t: {\mathbbm{1}}(U_1 < p) + {\mathbbm{1}}(U_2 < p) + \cdots {\mathbbm{1}}(U_t < p) = k \}.
\]

That is, \(T_k(p)\) is the smallest time \(t\) such that in the first \(t\) members of the Bernoulli iid sequence with mean \(p\), there were exactly \(k\) of the \({\mathbbm{1}}(U_i < p)\) values equal to 1 and the rest are equal to 0. In particular, \(T_1(p) \sim \textsf{Geo}(p)\). Then \(T_k\) is the sum of \(k\) independent geometric random variables, and so has a \emph{negative binomial} distribution with parameters \(k\) and \(p\). Write \(T_k \sim \textsf{NegBin}(k, p)\).

\begin{theorem}
Let \( p \leq p' \), \( k \in \{1, 2, \ldots \} \), \( T_k(p) \sim \textsf{NegBin}(k, p) \) and \( T_k(p') \sim \textsf{NegBin}(k, p') \).  Then for all \( a \), \( {\mathbb{P}}(T_k(p) \leq a) \leq {\mathbb{P}}(T_k(p') \leq a) \).
\end{theorem}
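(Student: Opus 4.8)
The plan is to exploit the monotone coupling that is already built into the construction of $T_k(p)$. The key observation is that the paper defines $T_k(p)$ on a single probability space, driven by one fixed sequence $U_1, U_2, \ldots$ of iid uniforms on $[0,1]$, via $T_k(p) = \inf\{t : \sum_{i \le t} \ind(U_i < p) = k\}$. Because the same uniforms drive the process for every value of the parameter, I can compare $T_k(p)$ and $T_k(p')$ pathwise rather than only in distribution.

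First I would record the pointwise monotonicity of the summands. For each realization of the $U_i$ and each index $i$, the inequality $p \le p'$ gives $\{U_i < p\} \subseteq \{U_i < p'\}$, hence $\ind(U_i < p) \le \ind(U_i < p')$. Summing over $i \le t$ yields $S_t(p) := \sum_{i\le t}\ind(U_i<p) \le \sum_{i\le t}\ind(U_i<p') =: S_t(p')$ for every $t$, i.e.\ the two counting processes are ordered at every time.

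Next I would translate this into an ordering of the hitting times. Since $S_t(p')$ dominates $S_t(p)$ and both are nondecreasing integer-valued processes, the level $k$ is reached by the $p'$-process no later than by the $p$-process; formally $T_k(p') = \inf\{t : S_t(p') = k\} \le \inf\{t : S_t(p) = k\} = T_k(p)$ holds surely on the common space. This pathwise inequality immediately gives the event inclusion $\{T_k(p) \le a\} \subseteq \{T_k(p') \le a\}$ for every real $a$, because on the former event $T_k(p') \le T_k(p) \le a$. Taking probabilities yields $\bbP(T_k(p) \le a) \le \bbP(T_k(p') \le a)$, which is the claim; equivalently, $T_k(p')$ is stochastically dominated by $T_k(p)$.

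There is essentially no hard analytic step here — the content lies entirely in setting up the coupling — so the only points requiring care are bookkeeping ones, and I expect these to be the main (minor) obstacle. I would note that the marginal laws are correct: the preceding discussion establishes that $T_k(p) \sim \textsf{NegBin}(k,p)$ for each fixed $p$, so the conclusion genuinely concerns the stated distributions. I would also remark that the distinction between $\ind(U_i < p)$ and $\ind(U_i \le p)$ is immaterial, since $\bbP(U_i = p) = 0$ for continuous uniforms. Finally, the argument makes no use of $a$ being an integer and covers the degenerate range $a < k$ (where both sides equal $0$) automatically.
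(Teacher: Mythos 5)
Your proposal is correct and follows essentially the same route as the paper: couple $T_k(p)$ and $T_k(p')$ through one shared sequence of uniforms, deduce the pathwise ordering $T_k(p') \leq T_k(p)$ from the monotonicity of the indicators, and pass to the marginal statement since that statement does not depend on the coupling. No gaps; the remarks about $a$ and the strict-versus-weak inequality in the indicator are fine but not needed.
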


\begin{proof}
  Let \( U_1, U_2, \ldots \) be iid uniforms over \( [0, 1] \) used to form \( T_k(p) \) and \( T_k(p') \).  

  Then the Bernoulli random variables \( {\mathbbm{1}}(U_i < p) \) are monotonically increasing in \( p \), so \( T_k(p) \) is monotonically decreasing in \( p \).  That is, for \( p \leq p' \), \( T_k(p) \geq T_k(p') \).  This means that for all \( a \), \( {\mathbb{P}}(T_k(p) \leq a) \leq {\mathbb{P}}(T_k(p') \leq a) \).

This holds for negative binomials coupled using the same uniforms, but the theorem statement is a statement about the marginal probabilities, which holds or fails regardless of the coupling.  Since it holds for one coupling, it holds for all.
\end{proof}

What does this do for us? In the \textsf{DKLR} algorithm, let \(T_k(p) = R_1 + \cdots + R_k\). Then failure occurs when \(\hat p_{\textsf{DKLR}} = k / T_k(p)\) is either too large or too small. Suppose that
\[
p_1 \leq p \leq p_2
\]
where \(p\) is unknown to the user but \(p_1\) and \(p_2\) are both known. Then the probability that \(T_k(p_1)\) is too large is an upper bound on the probability that \(T_k(p)\) is too large for all \(p \in [p_1, p_2]\). Similarly, the probability that \(T_k(p_2)\) is too small is an upper bound on the probability that \(T_k(p)\) is too small for all \(p \in [p_1, p_2]\).

To bound the upper tail for relative error, note that
\begin{eqnarray}
  {\mathbb{P}}\left(\frac{k - 1}{T_k(p)} > (1 + \epsilon) p\right) & = &
    {\mathbb{P}}\left( T_k(p) < \frac{k - 1}{(1 + \epsilon) p} \right) \\
  & \leq &
    {\mathbb{P}}\left( T_k(p) < \frac{k - 1}{(1 + \epsilon) p_1} \right) \\
  & \leq &
    {\mathbb{P}}\left( T_k(p_2) < \frac{k - 1}{(1 + \epsilon) p_1} \right) 
\end{eqnarray}
Similarly,
\begin{equation}
  {\mathbb{P}}\left(\frac{k - 1}{T_k(p)} < (1 - \epsilon) p\right) \leq 
    {\mathbb{P}}\left( T_k(p_1) > \frac{k - 1}{(1 - \epsilon) p_2} \right) 
\end{equation}

At this point the upper and lower tails for the estimate with \(p\) (the size of which are unknown) have been bounded in terms of an upper tail for \(p_1\) and a lower tail for \(p_2\). These bounds can be used to find a value for \(k\) to use in \textsf{DKLR} that gives an \((\epsilon, \delta)\)-ras for all \(p \in [p_1, p_2]\).

\hypertarget{section-2}{%
\paragraph*{\texorpdfstring{\textsf{find\_k}}{}}\label{section-2}}
\addcontentsline{toc}{paragraph}{\textsf{find\_k}}

\begin{enumerate}

  \item  Given \( \epsilon \), \( \delta \), and \( a < b \), return the smallest value of \( k \) such that 
    \[
    {\mathbb{P}}\left(T_k(a) > \frac{k - 1}{ (1 - \epsilon)b } \right) + {\mathbb{P}}\left(T_k(b) < \frac{k - 1}{(1 + \epsilon) a} \right) \leq \delta.
    \]
    
  \end{enumerate}

The output of this algorithm is, for all \(p \in [a, b]\), a value of \(k\) that yields an \((\epsilon, \delta)\)-ras using \textsf{DKLR}. The problem is, if the length of the interval \(b - a\) is very large, the value of \(k\) this gives could be very much larger than optimal. To solve this issue, break \([a, b]\) into smaller intervals and find \(k\) for each one of the smaller intervals. Then the largest of these \(k\) values found will work for all intervals taken individually, and hence works also for the union of the smaller intervals, which is just \([a, b]\).

In particular, if all that is known is that \(p \in [a, 1]\), then it is possible to partition the large interval \([a, 1]\) into \(m\) small intervals using \(a_i\) satisfying
\[
a = a_0 < a_1 < a_2 < \cdots < a_m = 1.
\]

The probability that the \(T\) is too large or too small for \(p \in [a, 1]\) is bounded above by the maximum value that \(T\) is too large or too small for each of the subintervals \([a_i, a_{i + 1}]\). This leads to the following algorithm.

\hypertarget{section-3}{%
\paragraph*{\texorpdfstring{\textsf{two\_stage\_dklr}}{}}\label{section-3}}
\addcontentsline{toc}{paragraph}{\textsf{two\_stage\_dklr}}

\begin{enumerate}
\def\labelenumi{\arabic{enumi}.}
\item
  Given \(\epsilon\) and \(\delta\), run \textsf{GBAS} with \(\sqrt{\epsilon}\) and \(\delta / 2\). This gives an estimate \(\hat p_1\). Then for \(a = \hat p_1 / (1 + \epsilon)\), \({\mathbb{P}}(p < a) \leq \delta / 2\).
\item
  Break the interval \([a, 1]\) into subintervals of width at most \(\epsilon(1 - a) / 10\).
\item
  For each subinterval, calculate the smallest value of \(k\) (using \textsf{find\_k)} that yields for \(\epsilon\) error at most \(\delta / 2\).
\item
  Draw \(R_1, \ldots, R_k\) iid \(\textsf{Geo}(p)\) using the iid stream of Bernoulli random variables.
\item
  Return \((k - 1) / (R_1 + \cdots + R_k)\).
\end{enumerate}

From the previous results, the following holds.

\begin{theorem}
\protect\hypertarget{thm:unnamed-chunk-2}{}\label{thm:unnamed-chunk-2}Let \(\hat p\) be the output of the algorithm \textsf{two\_stage\_dklr}. Then \(\hat p\) satisfies
\begin{equation}
    {\mathbb{P}}(|(\hat p / p) - 1| > \epsilon) \leq \delta).
  \end{equation}
\end{theorem}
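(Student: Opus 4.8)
The plan is to decompose the overall failure event according to whether the first (\textsf{GBAS}) stage succeeds in producing a valid lower bound $a$ on $p$, and then to spend a separate budget of $\delta/2$ on each piece. Write $F = \{|\hat p/p - 1| > \epsilon\}$ for the event to be bounded, and let $\mathcal{F}_1$ denote all the randomness consumed by the first stage, so that the lower endpoint $a = \hat p_1/(1+\epsilon)$ and the sample count $k$ (returned by \textsf{find\_k} on the subintervals of $[a,1]$) are both $\mathcal{F}_1$-measurable. Since $p$ is a fixed constant, the event $\{p < a\}$ is $\mathcal{F}_1$-measurable as well. By the tower property,
\[
\bbP(F) = \bbE[\bbP(F\mid\mathcal{F}_1)\ind(p < a)] + \bbE[\bbP(F\mid\mathcal{F}_1)\ind(p \ge a)].
\]
Bounding $\bbP(F\mid\mathcal{F}_1)\le 1$ on the first summand and invoking the first-stage guarantee $\bbP(p<a)\le\delta/2$ recorded in Step~1 of the algorithm controls that summand by $\delta/2$; it remains to show that the conditional failure probability is at most $\delta/2$ on the event $\{p\ge a\}$.

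For the second summand I would use that the geometric samples $R_1,\dots,R_k$ drawn in Step~4 come from a fresh portion of the Bernoulli stream and are therefore independent of $\mathcal{F}_1$. Hence conditioning on $\mathcal{F}_1$ fixes $a$ and $k$, and $\hat p = (k-1)/T_k(p)$ is exactly the \textsf{DKLR} estimator with that fixed $k$ applied to independent samples whose conditional law is still $\textsf{NegBin}(k,p)$. On $\{p\ge a\}$ we have $p\in[a,1]$, so $p$ lies in one of the subintervals $[a_i,a_{i+1}]$ of the partition built in Step~2. Applying the two coupling-based tail inequalities derived just before the \textsf{find\_k} description (with $p_1=a_i$, $p_2=a_{i+1}$) together with a union bound over the upper and lower tails gives
\[
\bbP(F\mid\mathcal{F}_1) \le \bbP\!\left(T_k(a_i) > \frac{k-1}{(1-\epsilon)a_{i+1}}\right) + \bbP\!\left(T_k(a_{i+1}) < \frac{k-1}{(1+\epsilon)a_i}\right).
\]
By the defining property of \textsf{find\_k} (called here with failure budget $\delta/2$), the right-hand side is at most $\delta/2$, and since Step~3 takes the largest $k$ over all subintervals, this bound holds for whichever subinterval contains $p$. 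Combining the two summands yields $\bbP(F)\le \delta/2+\delta/2=\delta$.

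The main obstacle I anticipate is justifying that the single value of $k$—the maximum returned across the subintervals—really does deliver failure probability at most $\delta/2$ on \emph{every} subinterval simultaneously, since \textsf{find\_k} only certifies the minimal $k$ for each interval separately. This requires that the per-interval failure probability be monotone decreasing in $k$, so that a $k$ large enough for one subinterval is never too small for another; I would supply this (or cite it, as the surrounding text already asserts it) by noting that the two tail expressions in the \textsf{find\_k} condition are decreasing in $k$ for fixed endpoints. A secondary point worth stating explicitly is the independence of the two stages, which is precisely what licenses treating the data-dependent $k$ as a constant under the conditioning on $\mathcal{F}_1$; without it the unconditional tail bounds could not be applied pointwise. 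The remaining ingredients—the coupling tail inequalities and the first-stage guarantee—are already established in the preceding text, so no further estimation is needed; note in particular that the subinterval width $\epsilon(1-a)/10$ plays no role in correctness and affects only how close the resulting $k$ is to optimal.
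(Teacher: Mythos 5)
Your proof is correct and follows exactly the route the paper intends: the paper offers no explicit proof beyond ``From the previous results, the following holds,'' and your argument assembles precisely those results (the stage-one guarantee $\bbP(p<a)\le\delta/2$, independence of the two stages, the coupling tail bounds for $T_k(p_1)$ and $T_k(p_2)$, the defining condition of \textsf{find\_k}, and a union bound giving $\delta/2+\delta/2=\delta$). Your explicit conditioning on the first-stage randomness and your flagging of the monotonicity-in-$k$ issue behind taking the maximum $k$ over subintervals are, if anything, more careful than the paper's own presentation.
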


The choice of \(\epsilon(1 - a) / 10\) for the subinterval width ensures that the \(\epsilon\) relative error is much larger than the widths, and so errors arising from the width choice are negligible. The choice to use 10 here is not important, larger values like 100 are still valid, but will take longer to compute the error probabilities.

\hypertarget{tilting}{%
\subsection{Tilting}\label{tilting}}

In \cite{Hu_FeHuRu2019}, the idea of \emph{tilting} the resulting estimate was introduced. Tilting takes advantage of the fact that one tail of the sum of geometrics is unbounded and goes down slightly more slowly, while the other is bounded by \(k\) and so goes down slightly more quickly. By adjusting the estimate by a multiplicative factor that is close to 1, the asymptotic behavior of the tails can be balanced to go down as rapidly as possible.

In the case of both \textsf{DKLR} and \textsf{GBAS}, balancing the tails requires dividing the estimate from earlier by
\[
c_{\text{tilt}} = \frac{2 \epsilon}{(1 - \epsilon^2)\ln(1 + 2 \epsilon / (1 - \epsilon))}.
\]
This is very close to 1. More precisely, \(c_{\text{tilt}} = 1 + (2/3)\epsilon^2 + O(\epsilon^4)\).

\hypertarget{running-time}{%
\subsection{Running time}\label{running-time}}

How does the \textsf{two\_stage\_dklr} with tilting perform in practice? Generally speaking, it is better than both the original \textsf{DKLR} and \textsf{GBAS}, especially when \(p\) is close to 1.

The running time of all these algorithms is determined by \(k / p\). The following table shows what value of \(k\) is needed for the first and second stages in \textsf{two\_stage\_dklr}. The first stage sets the error at \(\epsilon^{-1/2}\), and since the \textsf{GBAS} method uses \(\Theta(\epsilon^{-2})\) samples, the first stage uses approximately
\begin{equation}
  (\epsilon^{1/2})^{-2} = \epsilon^{-1} / (1 - p)
\end{equation}
times the overall optimal number of samples.

Now consider the second stage. The second stage runs slower the smaller the estimate \(\hat p_1\) is. The smallest it can be (assuming stage one was succesful) is \((1 - \sqrt{\epsilon}) p\). It is divided by \(1 + \sqrt{\epsilon}\) to get a lower bound on \(p\). Taken together, the gap between 1 and the lower bound on \(p\) is
\begin{equation}
1 - p(1 - \sqrt{\epsilon}) / (1 + \sqrt{\epsilon}).
\end{equation}

Therefore, this factor of samples is gained, but \(1 / (1 - p)\) is still lost. So stage two uses about
\begin{equation}
\frac{1 - p(1 - \sqrt{\epsilon}) / (1 + \sqrt{\epsilon}) + \epsilon}{1 - p}
\end{equation}
times the optimal number of \textsf{DKLR} samples, which itself has a speedup of \(1 / (1 - p)\) over \textsf{GBAS}.

A final effect to consider is that \textsf{GBAS} has \(\delta\) as the error bound, while each stage of two stage \textsf{DKLR} has \(\delta / 2\) as the error bound so that the total probability of error is at most \(\delta\) by the union bound. Combining this effect with the earlier calculations, the speedup from \textsf{GBAS} to two stage \textsf{DKLR} is about
\begin{equation}
\rho = \frac{\ln(\delta^{-1})}{\ln(2) + \ln(\delta^{-1})} \cdot \frac{1}{1 - p(1 - \sqrt{\epsilon}) / (1 + \sqrt{\epsilon}) + \epsilon}.
\end{equation}

This is just an estimate of the speedup, but as Table \ref{Hu_tab1} shows, this estimate is fairly accurate. As \(\epsilon\) and \(\delta\) approach zero, this estimate \(\rho\) of the speedup approaches the optimal speedup of \(1 / ( 1 - p)\).

\begin{table}
  \caption{Determination of \( k \) for various values of \( (p, \epsilon, \delta) \).  The speedup is the k for \textsf{GBAS} divided by the sum of the \( k \) values for Stage 1 and Stage 2.  The absolute best this could be for any algorithm is \( 1 / (1 - p) \).  The estimate of the speedup for the two stage algorithm is \( \rho = \ln(\delta^{-1}) / [(2 + \ln(\delta^{-1})) / (1 - p(1 - \sqrt{\epsilon}) / (1 + \sqrt{\epsilon}) + \epsilon)] \).}
  \label{Hu_tab1}
  \begin{tabular}{p{0.54cm}p{0.54cm}p{1.2cm}p{1.3cm}p{1.54cm}p{1.54cm}p{1.3cm}p{1.3cm}p{1.54cm}}
\toprule
\( p \) & \( \epsilon \) & \( \delta \) & \( k \) \textsf{GBAS} & \( k \) Stage 1 & \( k \) Stage 2 & Speedup & \( \rho \) & \( 1 / (1 - p) \) \\
\midrule
0.9 & 0.10 & 1e-02 & 661 & 76 & 413 & 1.35 & 1.37 & 10.00\\
0.9 & 0.10 & 1e-06 & 2380 & 239 & 1317 & 1.53 & 1.51 & 10.00\\
0.9 & 0.01 & 1e-06 & 239268 & 2513 & 66203 & 3.48 & 3.48 & 10.00\\
0.5 & 0.10 & 1e-02 & 661 & 76 & 551 & 1.05 & 1.03 & 2.00\\
0.5 & 0.10 & 1e-06 & 2380 & 239 & 1760 & 1.19 & 1.13 & 2.00\\
0.5 & 0.01 & 1e-06 & 239268 & 2513 & 145055 & 1.62 & 1.58 & 2.00\\
0.1 & 0.10 & 1e-02 & 661 & 76 & 595 & 0.99 & 0.83 & 1.11\\
0.1 & 0.10 & 1e-06 & 2380 & 239 & 1901 & 1.11 & 0.91 & 1.11\\
0.1 & 0.01 & 1e-06 & 239268 & 2513 & 191853 & 1.23 & 1.03 & 1.11\\
\bottomrule
\end{tabular}
\end{table}

\hypertarget{unbased-approach}{%
\section{Unbased approach}\label{unbased-approach}}

While \textsf{two\_stage\_dklr} gives the fewest number of samples, it does suffer from the fact that the estimate is biased. To deal with this issue while maintaining most of the benefit from the \(1 - p\) factor, a \emph{shifted grid approach} can be used.

For positive integer \(n\), consider a grid of points \(x = (0, 1 / n, 2 / n, \ldots, (n - 1) / n)\). If \(U\) is uniform over \([0, 1]\), then \(x + U \text{ mod } 1\) consists of a sets of points that are not independent, but which are identically distributed. That is, each point will be uniform over \([0, 1]\). As usual, mod 1 means that for any value in \([1, 2)\), 1 is subtracted from the value. For \(U \sim \textsf{Unif}([0, 1])\), set
\begin{equation}
    \label{Hu_shifted_grid}
    (U_1, \ldots, U_n) =\left(\frac{0 + U}{n}, \frac{1 + U}{n}, \frac{2 + U}{n}, \ldots \frac{n - 1 + U}{n} \right) \text{ mod } 1.
  \end{equation}

The \textsf{GBAS} method first uses the Bernoulli values to draw \(M\) as the sum of \(k\) iid geometric random variables with mean \(1 / p\). Then given \(M\), draw \(G\) as gamma with shape parameter \(M\) and rate parameter 1. The result is a gamma distribution with parameters \(k\) and \(p\).

Given a uniform \(U\) over \([0, 1]\), the \emph{inverse transform method} (ITM) can be used to draw from a given one dimensional distribution. For a random variable \(X\) with cumulative distribution function \(F_X\), the pseudoinverse is defined as
\begin{equation}
F_X^{-1}(a) = \inf\{b: F_X(b) \geq a \}.
\end{equation}

The ITM method relies on the fact that for \(U\) uniform over \([0, 1]\), \(X\) and \(F_X^{-1}(U)\) have the same distribution.

This implies that for all \(i \in \{1, \ldots, n \}\),
\begin{equation}
\hat p_i = \frac{k - 1}{F_{[G \mid M]}^{-1}(U_i)}
\end{equation}
is an unbiased estimator of \(p\), and the sample average of the \(\hat p_i\) will be as well.

This sample average \(\hat p_i\) tends to be close to \(1 / M\) (which is the \textsf{DKLR} estimate), as seen next.

\subsection{Non-interval centered unbiased estimate}

By using the new \textsf{DKLR} method, a confidence interval of width at most \(2\epsilon p\) is created with the biased estimate at its center. In the process, a negative binomial \(M\) with parameters \(k\) and \(p\) is created. At the same time, using the shifted grid method an unbiased estimator can also be found at the same time from \(M\).

The unbiased estimator will most likely be very close to the estimate \((k - 1) / M\). This is because the relative error between \([G | M] \sim \textsf{Gamma}(M, 1)\) and \(M\) is typically small. Let
\begin{equation}
  \label{Hu_eqn_biased_estimate}
  \hat p_{\text{b}} = (k - 1) / M
\end{equation}
be the biased estimate and for \(U_i\) drawn using \eqref{Hu_shifted_grid},
\begin{equation}
  \label{Hu_unbiased_estimate}
  \hat p_{\text{unb}} = (k - 1) \frac{1}{n} \sum_{i = 1}^n 1 / F^{-1}_{[G \mid M]}(U_i)
\end{equation}
be the unbiased estimate.

In calculating \(\hat p_{\text{unb}} / \hat p_{\text{b}}\), the \(k - 1\) factor cancels out, leaving us with
\begin{equation}
\label{HU_ratio}
r = \frac{\hat p_{\text{unb}}}{\hat p_{\text{b}}} = M \frac{1}{n} \sum_{i = 1}^n 1 / F^{-1}_{[G \mid M]}(U_i).
\end{equation}

When \(r = 1\), the unbiased and biased entries are the same, so \(|r - 1|\) measures the relative error between the two. With probability \(\delta_1\), \(U \in [\delta_1 / 2, 1 - \delta_1 / 2]\). Plugging these interval endpoints for \(U\) into \eqref{HU_ratio} gives an upper bound on \(|r - 1|\).

Table \ref{Hu_shifted_grid_rel_error} shows the bounds on the distribution of \(r\) for some other values of the variables. As seen in this table, the chance of even a small relative difference between the biased and unbiased estimates becomes very small very fast as \(M\) grows. It should also be noted that for \(n = 10^3\), the unbiased estimate can be computed in a fraction of a second.

\begin{table}[!t]
\caption{Bounds on the relative error between the unbiased and biased estimate.  Here \( r \) is the unbiased estimate divided by the biased estimate.}
\label{Hu_shifted_grid_rel_error} 
\begin{tabular}{p{2cm}p{2.4cm}p{2cm}p{5.4cm}}
\toprule
  \( M \) & \( n \) & \( \delta_1 \) & \( x \) such that \( {\mathbb{P}}(|r - 1| \leq x) = 1 - \delta_1 \) \\
\midrule
\( 10^4 \) & \( 10^3 \) & \( 10^{-6} \) & \( 0.0001497 \) \\
\( 10^4 \) & \( 10^4 \) & \( 10^{-6} \) & \( 0.0001049 \) \\
\( 10^4 \) & \( 10^3 \) & \( 10^{-8} \) & \( 0.0001597 \) \\
\( 10^4 \) & \( 10^2 \) & \( 10^{-8} \) & \( 0.0006899 \) \\
\( 10^5 \) & \( 10^3 \) & \( 10^{-8} \) & \( 0.0000283 \) \\
\bottomrule
\end{tabular}
\end{table}

\hypertarget{conclusion}{%
\section{Conclusion}\label{conclusion}}

Although the \textsf{DKLR} algorithm provides an \((\epsilon, \delta)\)-ras for estimating the mean of a Bernoulli random variable from a stream of such variables, it has not been widely used. This is partially due to loose bounds on the number of samples needed. The best speedup over existing methods that could be achieved is \(1 / (1 - p)\). This work gives a new version of the algorithm that uses a number of samples that approaches the optimal speedup. Roughly, the speedup for this two stage algorithm is
\[
\frac{\ln(\delta^{-1})}{\ln(2) + \ln(\delta^{-1})} \cdot \frac{1}{1 - p(1 - \sqrt{\epsilon}) / (1 + \sqrt{\epsilon}) + \epsilon}.
\]

By using a randomly shifted grid and the inverse transform method, this approach can also yield an unbiased estimate that is very close to the biased estimate. Therefore the interval surrounding the biased estimate can also be used to give an interval around the unbiased estimate of only slightly smaller size.

\bibliographystyle{plain}
\bibliography{refs}

\appendix

\hypertarget{code-for-the-paper}{%
\section{Code for the paper}\label{code-for-the-paper}}

The following code produces the tables in the main text. First load in the needed libraries.

\begin{Shaded}
\begin{Highlighting}[]
\FunctionTok{library}\NormalTok{(kableExtra)}
\FunctionTok{library}\NormalTok{(tidyverse)}
\end{Highlighting}
\end{Shaded}

\hypertarget{preliminary-functions}{%
\subsection{Preliminary functions}\label{preliminary-functions}}

The first function, \texttt{tilt\_value}, calculates the amount to divide the unbiased estimate in \(\textsf{GBAS}\) to make the tails go down at the same rate.

\begin{Shaded}
\begin{Highlighting}[]
\NormalTok{tilt\_value }\OtherTok{\textless{}{-}} \ControlFlowTok{function}\NormalTok{(epsilon) }
  \DecValTok{2} \SpecialCharTok{*}\NormalTok{ epsilon }\SpecialCharTok{/}\NormalTok{ (}\DecValTok{1} \SpecialCharTok{{-}}\NormalTok{ epsilon}\SpecialCharTok{\^{}}\DecValTok{2}\NormalTok{) }\SpecialCharTok{/} \FunctionTok{log}\NormalTok{(}\DecValTok{1} \SpecialCharTok{+} \DecValTok{2} \SpecialCharTok{*}\NormalTok{ epsilon }\SpecialCharTok{/} 
\NormalTok{                                        (}\DecValTok{1} \SpecialCharTok{{-}}\NormalTok{ epsilon))}
\end{Highlighting}
\end{Shaded}

This next function calculates the minimum possible \(k\) needed for a given \(\epsilon\) and \(\delta\) for \(\textsf{GBAS}\). Note that once the function has been run once, the parameter \texttt{min\_k} can be set appropriately to make it run more quickly on a second run.

\begin{Shaded}
\begin{Highlighting}[]
\NormalTok{find\_k\_GBAS }\OtherTok{\textless{}{-}} \ControlFlowTok{function}\NormalTok{(epsilon, delta, }\AttributeTok{tilt =} \ConstantTok{FALSE}\NormalTok{, }\AttributeTok{min\_k =} \DecValTok{1}\NormalTok{) \{}
\NormalTok{  tilt\_constant }\OtherTok{\textless{}{-}} \DecValTok{1} \SpecialCharTok{{-}}\NormalTok{ tilt }\SpecialCharTok{+} \FunctionTok{tilt\_value}\NormalTok{(epsilon) }\SpecialCharTok{*}\NormalTok{ tilt}
\NormalTok{  k }\OtherTok{\textless{}{-}}\NormalTok{ min\_k}
\NormalTok{  accept }\OtherTok{\textless{}{-}} \ConstantTok{FALSE}
  \ControlFlowTok{while}\NormalTok{ (}\SpecialCharTok{!}\NormalTok{accept) \{}
\NormalTok{    k }\OtherTok{\textless{}{-}}\NormalTok{ k }\SpecialCharTok{+} \DecValTok{1}
\NormalTok{    high\_error }\OtherTok{\textless{}{-}} \DecValTok{1} \SpecialCharTok{{-}} \FunctionTok{pgamma}\NormalTok{(}\DecValTok{1} \SpecialCharTok{/}\NormalTok{ (}\DecValTok{1} \SpecialCharTok{{-}}\NormalTok{ epsilon), }
                             \AttributeTok{shape =}\NormalTok{ k, }
                             \AttributeTok{rate =}\NormalTok{ (k }\SpecialCharTok{{-}} \DecValTok{1}\NormalTok{) }\SpecialCharTok{/}\NormalTok{ tilt\_constant)}
\NormalTok{    low\_error }\OtherTok{\textless{}{-}} \FunctionTok{pgamma}\NormalTok{(}\DecValTok{1} \SpecialCharTok{/}\NormalTok{ (}\DecValTok{1} \SpecialCharTok{+}\NormalTok{ epsilon), }
                        \AttributeTok{shape =}\NormalTok{ k, }
                        \AttributeTok{rate =}\NormalTok{ (k }\SpecialCharTok{{-}} \DecValTok{1}\NormalTok{) }\SpecialCharTok{/}\NormalTok{ tilt\_constant)}
\NormalTok{    accept }\OtherTok{\textless{}{-}}\NormalTok{ (high\_error }\SpecialCharTok{+}\NormalTok{ low\_error) }\SpecialCharTok{\textless{}}\NormalTok{ delta}
\NormalTok{  \}}
  \FunctionTok{return}\NormalTok{(k)}
\NormalTok{\}}
\end{Highlighting}
\end{Shaded}

It is helpful to have a vector version of this function.

\begin{Shaded}
\begin{Highlighting}[]
\NormalTok{find\_k\_GBAS\_vector }\OtherTok{\textless{}{-}} \ControlFlowTok{function}\NormalTok{(epsilon, delta, }\AttributeTok{tilt =} \ConstantTok{FALSE}\NormalTok{, }\AttributeTok{min\_k =} \DecValTok{1}\NormalTok{) \{}
\NormalTok{  A }\OtherTok{\textless{}{-}} \FunctionTok{cbind}\NormalTok{(epsilon, delta, tilt, min\_k)}
  \FunctionTok{return}\NormalTok{(}\FunctionTok{apply}\NormalTok{(A, }\DecValTok{1}\NormalTok{, }\ControlFlowTok{function}\NormalTok{(x) }\FunctionTok{find\_k\_GBAS}\NormalTok{(x[}\DecValTok{1}\NormalTok{], x[}\DecValTok{2}\NormalTok{], x[}\DecValTok{3}\NormalTok{], x[}\DecValTok{4}\NormalTok{])))}
\NormalTok{\}}
\end{Highlighting}
\end{Shaded}

For a minimum value of \(p\), the \texttt{find\_error\_dklr\_small\_num} function finds the largest probability of the \(\textsf{DKLR}\) estimate being too small and too large, and returns the results as a length two vector.

\begin{Shaded}
\begin{Highlighting}[]
\NormalTok{find\_error\_dklr\_small\_num }\OtherTok{\textless{}{-}} \ControlFlowTok{function}\NormalTok{(minp, epsilon, k, }\AttributeTok{tilt =} \ConstantTok{FALSE}\NormalTok{) \{}
\NormalTok{  change\_p }\OtherTok{\textless{}{-}}\NormalTok{ (}\DecValTok{1} \SpecialCharTok{{-}}\NormalTok{ minp) }\SpecialCharTok{*}\NormalTok{ epsilon }\SpecialCharTok{/} \DecValTok{100}
\NormalTok{  tilt }\OtherTok{\textless{}{-}} \FunctionTok{tilt\_value}\NormalTok{(epsilon)}
\NormalTok{  seqp }\OtherTok{\textless{}{-}} \FunctionTok{seq}\NormalTok{(minp, }\DecValTok{1}\NormalTok{, }\AttributeTok{by =}\NormalTok{ change\_p)}
\NormalTok{  lower }\OtherTok{\textless{}{-}}\NormalTok{ seqp[}\SpecialCharTok{{-}}\FunctionTok{length}\NormalTok{(seqp)]}
\NormalTok{  upper }\OtherTok{\textless{}{-}}\NormalTok{ seqp[}\SpecialCharTok{{-}}\DecValTok{1}\NormalTok{]}
\NormalTok{  lower\_error }\OtherTok{\textless{}{-}} \DecValTok{1} \SpecialCharTok{{-}} \FunctionTok{pnbinom}\NormalTok{((k }\SpecialCharTok{{-}} \DecValTok{1}\NormalTok{) }\SpecialCharTok{/}\NormalTok{ (lower }\SpecialCharTok{*}\NormalTok{ (}\DecValTok{1} \SpecialCharTok{{-}}\NormalTok{ epsilon)) }\SpecialCharTok{/}\NormalTok{ tilt }\SpecialCharTok{{-}}\NormalTok{ k, k, upper)}
\NormalTok{  upper\_error }\OtherTok{\textless{}{-}} \FunctionTok{pnbinom}\NormalTok{((k }\SpecialCharTok{{-}} \DecValTok{1}\NormalTok{) }\SpecialCharTok{/}\NormalTok{ (upper }\SpecialCharTok{*}\NormalTok{ (}\DecValTok{1} \SpecialCharTok{+}\NormalTok{ epsilon)) }\SpecialCharTok{/}\NormalTok{ tilt }\SpecialCharTok{{-}}\NormalTok{ k, k, lower)}
\NormalTok{  A }\OtherTok{\textless{}{-}} \FunctionTok{cbind}\NormalTok{(seqp[}\SpecialCharTok{{-}}\FunctionTok{length}\NormalTok{(seqp)], }
\NormalTok{             seqp[}\SpecialCharTok{{-}}\DecValTok{1}\NormalTok{], }
\NormalTok{             lower\_error,}
\NormalTok{             upper\_error)}
  \FunctionTok{return}\NormalTok{(}\FunctionTok{c}\NormalTok{(}\FunctionTok{max}\NormalTok{(A[,}\DecValTok{3}\NormalTok{], }\AttributeTok{na.rm =} \ConstantTok{TRUE}\NormalTok{), }\FunctionTok{max}\NormalTok{(A[,}\DecValTok{4}\NormalTok{], }\AttributeTok{na.rm =} \ConstantTok{TRUE}\NormalTok{)))}
\NormalTok{\}}
\end{Highlighting}
\end{Shaded}

\begin{Shaded}
\begin{Highlighting}[]
\NormalTok{find\_k\_DKLR }\OtherTok{\textless{}{-}} \ControlFlowTok{function}\NormalTok{(min\_p, epsilon, delta, }\AttributeTok{tilt =} \ConstantTok{FALSE}\NormalTok{, }\AttributeTok{min\_k =} \DecValTok{1}\NormalTok{) \{}
\NormalTok{  tilt\_constant }\OtherTok{\textless{}{-}} \DecValTok{1} \SpecialCharTok{{-}}\NormalTok{ tilt }\SpecialCharTok{+} \FunctionTok{tilt\_value}\NormalTok{(epsilon) }\SpecialCharTok{*}\NormalTok{ tilt}
\NormalTok{  k }\OtherTok{\textless{}{-}}\NormalTok{ min\_k}
\NormalTok{  accept }\OtherTok{\textless{}{-}} \ConstantTok{FALSE}
  \ControlFlowTok{while}\NormalTok{ (}\SpecialCharTok{!}\NormalTok{accept) \{}
\NormalTok{    k }\OtherTok{\textless{}{-}}\NormalTok{ k }\SpecialCharTok{+} \DecValTok{1}
\NormalTok{    error }\OtherTok{\textless{}{-}} \FunctionTok{sum}\NormalTok{(}\FunctionTok{find\_error\_dklr\_small\_num}\NormalTok{(min\_p, epsilon, k, tilt))}
\NormalTok{    accept }\OtherTok{\textless{}{-}}\NormalTok{ error }\SpecialCharTok{\textless{}}\NormalTok{ delta}
    \ControlFlowTok{if}\NormalTok{ ((k }\SpecialCharTok{{-}}\NormalTok{ min\_k) }\SpecialCharTok{\%\%} \DecValTok{1000} \SpecialCharTok{==} \DecValTok{0}\NormalTok{) }\FunctionTok{print}\NormalTok{(k)}
\NormalTok{  \}}
  \FunctionTok{return}\NormalTok{(k)}
\NormalTok{\}}
\end{Highlighting}
\end{Shaded}

It is helpful to have a version of this function that can take vectors as input and return a vector of outputs.

\begin{Shaded}
\begin{Highlighting}[]
\NormalTok{find\_k\_DKLR\_vector }\OtherTok{\textless{}{-}} 
  \ControlFlowTok{function}\NormalTok{(min\_p, epsilon, delta, }\AttributeTok{tilt =} \ConstantTok{FALSE}\NormalTok{, }\AttributeTok{min\_k =} \DecValTok{1}\NormalTok{) }
  \FunctionTok{apply}\NormalTok{(}\FunctionTok{cbind}\NormalTok{(min\_p, epsilon, delta, tilt, min\_k), }\DecValTok{1}\NormalTok{, }
        \ControlFlowTok{function}\NormalTok{(x) }\FunctionTok{find\_k\_DKLR}\NormalTok{(x[}\DecValTok{1}\NormalTok{], x[}\DecValTok{2}\NormalTok{], x[}\DecValTok{3}\NormalTok{], x[}\DecValTok{4}\NormalTok{], x[}\DecValTok{5}\NormalTok{]))}
\end{Highlighting}
\end{Shaded}

\hypertarget{making-table-1}{%
\subsection{Making Table 1}\label{making-table-1}}

Set up the values needed for Table 1.

\begin{Shaded}
\begin{Highlighting}[]
\NormalTok{p }\OtherTok{\textless{}{-}} \FunctionTok{c}\NormalTok{(}\FunctionTok{rep}\NormalTok{(}\FloatTok{0.9}\NormalTok{, }\DecValTok{3}\NormalTok{), }\FunctionTok{rep}\NormalTok{(}\FloatTok{0.5}\NormalTok{, }\DecValTok{3}\NormalTok{), }\FunctionTok{rep}\NormalTok{(}\FloatTok{0.1}\NormalTok{, }\DecValTok{3}\NormalTok{))}
\NormalTok{epsilon }\OtherTok{\textless{}{-}} \FunctionTok{rep}\NormalTok{(}\FunctionTok{c}\NormalTok{(}\FloatTok{0.1}\NormalTok{, }\FloatTok{0.1}\NormalTok{, }\FloatTok{0.01}\NormalTok{), }\DecValTok{3}\NormalTok{)}
\NormalTok{delta }\OtherTok{\textless{}{-}} \FunctionTok{rep}\NormalTok{(}\FunctionTok{c}\NormalTok{(}\DecValTok{10}\SpecialCharTok{\^{}}\NormalTok{(}\SpecialCharTok{{-}}\DecValTok{2}\NormalTok{), }\DecValTok{10}\SpecialCharTok{\^{}}\NormalTok{(}\SpecialCharTok{{-}}\DecValTok{6}\NormalTok{), }\DecValTok{10}\SpecialCharTok{\^{}}\NormalTok{(}\SpecialCharTok{{-}}\DecValTok{6}\NormalTok{)), }\DecValTok{3}\NormalTok{)}
\end{Highlighting}
\end{Shaded}

Reset the variable \texttt{k\_min1} and \texttt{k\_min2} after the first run to speed up subsequent runs of the code.

\begin{Shaded}
\begin{Highlighting}[]
\CommentTok{\# Set up for GBAS}
\NormalTok{k\_gbas }\OtherTok{\textless{}{-}} \FunctionTok{rep}\NormalTok{(}\DecValTok{1}\NormalTok{, }\FunctionTok{length}\NormalTok{(p))}

\CommentTok{\# Set up for stage one}
\NormalTok{k\_min1 }\OtherTok{\textless{}{-}} \FunctionTok{rep}\NormalTok{(}\DecValTok{1}\NormalTok{, }\FunctionTok{length}\NormalTok{(p))}
\NormalTok{k\_min1 }\OtherTok{\textless{}{-}} \FunctionTok{c}\NormalTok{(}\DecValTok{74}\NormalTok{, }\DecValTok{237}\NormalTok{, }\DecValTok{2511}\NormalTok{, }\DecValTok{74}\NormalTok{, }\DecValTok{237}\NormalTok{, }\DecValTok{2511}\NormalTok{, }\DecValTok{74}\NormalTok{, }\DecValTok{237}\NormalTok{, }\DecValTok{2511}\NormalTok{)}

\CommentTok{\# Set up for stage two}
\NormalTok{k\_min2 }\OtherTok{\textless{}{-}} \FunctionTok{rep}\NormalTok{(}\DecValTok{1}\NormalTok{, }\FunctionTok{length}\NormalTok{(p))}
\NormalTok{k\_min2 }\OtherTok{\textless{}{-}} \FunctionTok{c}\NormalTok{(}\DecValTok{411}\NormalTok{, }\DecValTok{1315}\NormalTok{, }\DecValTok{66200}\NormalTok{, }\DecValTok{549}\NormalTok{, }\DecValTok{1758}\NormalTok{, }\DecValTok{145053}\NormalTok{, }\DecValTok{593}\NormalTok{, }\DecValTok{1899}\NormalTok{, }\DecValTok{191851}\NormalTok{)}

\NormalTok{table1 }\OtherTok{\textless{}{-}} \FunctionTok{tibble}\NormalTok{(p, epsilon, delta, }\AttributeTok{tilt =} \ConstantTok{TRUE}\NormalTok{, k\_min1) }\SpecialCharTok{|\textgreater{}}
  \FunctionTok{mutate}\NormalTok{(}\AttributeTok{sq\_ep =} \FunctionTok{sqrt}\NormalTok{(epsilon),}
         \AttributeTok{ha\_delta =}\NormalTok{ delta }\SpecialCharTok{/} \DecValTok{2}\NormalTok{,}
         \AttributeTok{low\_p =}\NormalTok{ p }\SpecialCharTok{*}\NormalTok{ (}\DecValTok{1} \SpecialCharTok{{-}}\NormalTok{ sq\_ep) }\SpecialCharTok{/}\NormalTok{ (}\DecValTok{1} \SpecialCharTok{+}\NormalTok{ sq\_ep)) }\SpecialCharTok{|\textgreater{}}
  \FunctionTok{mutate}\NormalTok{(}\StringTok{\textasciigrave{}}\AttributeTok{k GBAS}\StringTok{\textasciigrave{}} \OtherTok{=} 
         \FunctionTok{find\_k\_GBAS\_vector}\NormalTok{(epsilon, delta, tilt, k\_gbas), }
         \AttributeTok{.before =} \StringTok{\textasciigrave{}}\AttributeTok{tilt}\StringTok{\textasciigrave{}}\NormalTok{) }\SpecialCharTok{|\textgreater{}}
  \FunctionTok{mutate}\NormalTok{(}\StringTok{\textasciigrave{}}\AttributeTok{k Stage 1}\StringTok{\textasciigrave{}} \OtherTok{=} 
         \FunctionTok{find\_k\_GBAS\_vector}\NormalTok{(sq\_ep, ha\_delta, tilt, k\_min1), }
         \AttributeTok{.before =} \StringTok{\textasciigrave{}}\AttributeTok{tilt}\StringTok{\textasciigrave{}}\NormalTok{) }\SpecialCharTok{|\textgreater{}}
  \FunctionTok{mutate}\NormalTok{(}\StringTok{\textasciigrave{}}\AttributeTok{k Stage 2}\StringTok{\textasciigrave{}} \OtherTok{=} 
         \FunctionTok{find\_k\_DKLR\_vector}\NormalTok{(low\_p, epsilon, ha\_delta, }\ConstantTok{TRUE}\NormalTok{, k\_min2),}
         \AttributeTok{.before =}\NormalTok{ tilt) }\SpecialCharTok{|\textgreater{}}
  \FunctionTok{mutate}\NormalTok{(}\AttributeTok{speedup =} \FunctionTok{round}\NormalTok{(}\StringTok{\textasciigrave{}}\AttributeTok{k GBAS}\StringTok{\textasciigrave{}} \SpecialCharTok{/}\NormalTok{ (}\StringTok{\textasciigrave{}}\AttributeTok{k Stage 1}\StringTok{\textasciigrave{}} \SpecialCharTok{+} \StringTok{\textasciigrave{}}\AttributeTok{k Stage 2}\StringTok{\textasciigrave{}}\NormalTok{), }\DecValTok{2}\NormalTok{),}
         \AttributeTok{rho =} \FunctionTok{round}\NormalTok{(}
           \FunctionTok{log}\NormalTok{(delta}\SpecialCharTok{\^{}}\NormalTok{(}\SpecialCharTok{{-}}\DecValTok{1}\NormalTok{)) }\SpecialCharTok{/}\NormalTok{ (}\FunctionTok{log}\NormalTok{(}\DecValTok{2} \SpecialCharTok{*}\NormalTok{ delta}\SpecialCharTok{\^{}}\NormalTok{(}\SpecialCharTok{{-}}\DecValTok{1}\NormalTok{))) }\SpecialCharTok{/} 
\NormalTok{           (}\DecValTok{1} \SpecialCharTok{{-}}\NormalTok{ p }\SpecialCharTok{*}\NormalTok{ (}\DecValTok{1} \SpecialCharTok{{-}}\NormalTok{ sq\_ep) }\SpecialCharTok{/}\NormalTok{ (}\DecValTok{1} \SpecialCharTok{+}\NormalTok{ sq\_ep) }\SpecialCharTok{+}\NormalTok{ epsilon), }\DecValTok{2}\NormalTok{),}
         \AttributeTok{geo\_speed =} \FunctionTok{round}\NormalTok{(}\DecValTok{1} \SpecialCharTok{/}\NormalTok{ (}\DecValTok{1} \SpecialCharTok{{-}}\NormalTok{ p), }\DecValTok{2}\NormalTok{),}
         \AttributeTok{.before =}\NormalTok{ tilt) }\SpecialCharTok{|\textgreater{}}
  \FunctionTok{select}\NormalTok{(}\DecValTok{1}\SpecialCharTok{:}\DecValTok{9}\NormalTok{)}
\end{Highlighting}
\end{Shaded}

\begin{Shaded}
\begin{Highlighting}[]
\NormalTok{table1 }\SpecialCharTok{|\textgreater{}} 
  \FunctionTok{kable}\NormalTok{(}\AttributeTok{booktabs =} \ConstantTok{TRUE}\NormalTok{) }\SpecialCharTok{|\textgreater{}} 
  \FunctionTok{kable\_styling}\NormalTok{(}\AttributeTok{latex\_options =} \FunctionTok{c}\NormalTok{(}\StringTok{"scale\_down"}\NormalTok{))}
\end{Highlighting}
\end{Shaded}

\begin{table}
\centering
\resizebox{\linewidth}{!}{
\begin{tabular}{rrrrrrrrr}
\toprule
p & epsilon & delta & k GBAS & k Stage 1 & k Stage 2 & speedup & rho & geo\_speed\\
\midrule
0.9 & 0.10 & 1e-02 & 661 & 76 & 413 & 1.35 & 1.37 & 10.00\\
0.9 & 0.10 & 1e-06 & 2380 & 239 & 1317 & 1.53 & 1.51 & 10.00\\
0.9 & 0.01 & 1e-06 & 239268 & 2513 & 66203 & 3.48 & 3.48 & 10.00\\
0.5 & 0.10 & 1e-02 & 661 & 76 & 551 & 1.05 & 1.03 & 2.00\\
0.5 & 0.10 & 1e-06 & 2380 & 239 & 1760 & 1.19 & 1.13 & 2.00\\
\addlinespace
0.5 & 0.01 & 1e-06 & 239268 & 2513 & 145055 & 1.62 & 1.58 & 2.00\\
0.1 & 0.10 & 1e-02 & 661 & 76 & 595 & 0.99 & 0.83 & 1.11\\
0.1 & 0.10 & 1e-06 & 2380 & 239 & 1901 & 1.11 & 0.91 & 1.11\\
0.1 & 0.01 & 1e-06 & 239268 & 2513 & 191853 & 1.23 & 1.03 & 1.11\\
\bottomrule
\end{tabular}}
\end{table}

\hypertarget{making-table-2}{%
\subsection{Making Table 2}\label{making-table-2}}

Given a value of \(M\), \(n\), and \(\delta_1\), find the maximum relative error between the unbiased and the biased estimate.

\begin{Shaded}
\begin{Highlighting}[]
\NormalTok{find\_error\_shifted\_grid }\OtherTok{\textless{}{-}} \ControlFlowTok{function}\NormalTok{(M, n , delta1) \{}
  \CommentTok{\# create a shifted grid on [0, 1] with n elements}
\NormalTok{  grid }\OtherTok{\textless{}{-}} \FunctionTok{seq}\NormalTok{(}\DecValTok{0}\NormalTok{, }\DecValTok{1} \SpecialCharTok{{-}} \DecValTok{1} \SpecialCharTok{/}\NormalTok{ n , }\AttributeTok{by =} \DecValTok{1} \SpecialCharTok{/}\NormalTok{ n)}
  \CommentTok{\# Do the 0 part first}
\NormalTok{  lower }\OtherTok{\textless{}{-}} \FunctionTok{abs}\NormalTok{(}\DecValTok{1} \SpecialCharTok{{-}} 
\NormalTok{    M }\SpecialCharTok{*} \FunctionTok{mean}\NormalTok{(}\DecValTok{1} \SpecialCharTok{/} \FunctionTok{qgamma}\NormalTok{(grid }\SpecialCharTok{+}\NormalTok{ delta1 }\SpecialCharTok{/} \DecValTok{2}\NormalTok{, M, }\DecValTok{1}\NormalTok{)))}
  \CommentTok{\# Do the high part second}
\NormalTok{  high }\OtherTok{\textless{}{-}} \FunctionTok{abs}\NormalTok{(}\DecValTok{1} \SpecialCharTok{{-}} 
\NormalTok{    M }\SpecialCharTok{*} \FunctionTok{mean}\NormalTok{(}\DecValTok{1} \SpecialCharTok{/} \FunctionTok{qgamma}\NormalTok{(grid }\SpecialCharTok{+} \DecValTok{1} \SpecialCharTok{/}\NormalTok{ n }\SpecialCharTok{{-}}\NormalTok{ delta1 }\SpecialCharTok{/} \DecValTok{2}\NormalTok{, M, }\DecValTok{1}\NormalTok{)))}
  \FunctionTok{return}\NormalTok{(}\FunctionTok{max}\NormalTok{(lower, high))}
\NormalTok{\}}
\end{Highlighting}
\end{Shaded}

\begin{Shaded}
\begin{Highlighting}[]
\NormalTok{find\_error\_shifted\_grid\_vector }\OtherTok{\textless{}{-}} \ControlFlowTok{function}\NormalTok{(M, n , delta1) }
  \FunctionTok{apply}\NormalTok{(}\FunctionTok{cbind}\NormalTok{(M, n, delta1), }\DecValTok{1}\NormalTok{, }
        \ControlFlowTok{function}\NormalTok{ (x) }\FunctionTok{find\_error\_shifted\_grid}\NormalTok{(}
\NormalTok{                      x[}\DecValTok{1}\NormalTok{], x[}\DecValTok{2}\NormalTok{], x[}\DecValTok{3}\NormalTok{]))}
\end{Highlighting}
\end{Shaded}

\begin{Shaded}
\begin{Highlighting}[]
\NormalTok{table2 }\OtherTok{\textless{}{-}}
  \FunctionTok{tibble}\NormalTok{(}
    \AttributeTok{M =} \FunctionTok{c}\NormalTok{(}\DecValTok{10}\SpecialCharTok{\^{}}\DecValTok{4}\NormalTok{, }\DecValTok{10}\SpecialCharTok{\^{}}\DecValTok{4}\NormalTok{, }\DecValTok{10}\SpecialCharTok{\^{}}\DecValTok{4}\NormalTok{, }\DecValTok{10}\SpecialCharTok{\^{}}\DecValTok{4}\NormalTok{, }\DecValTok{10}\SpecialCharTok{\^{}}\DecValTok{5}\NormalTok{),}
    \AttributeTok{n =} \FunctionTok{c}\NormalTok{(}\DecValTok{10}\SpecialCharTok{\^{}}\DecValTok{3}\NormalTok{, }\DecValTok{10}\SpecialCharTok{\^{}}\DecValTok{4}\NormalTok{, }\DecValTok{10}\SpecialCharTok{\^{}}\DecValTok{3}\NormalTok{, }\DecValTok{10}\SpecialCharTok{\^{}}\DecValTok{2}\NormalTok{, }\DecValTok{10}\SpecialCharTok{\^{}}\DecValTok{3}\NormalTok{),}
    \AttributeTok{delta1 =} \FunctionTok{c}\NormalTok{(}\DecValTok{10}\SpecialCharTok{\^{}}\NormalTok{(}\SpecialCharTok{{-}}\DecValTok{6}\NormalTok{), }\DecValTok{10}\SpecialCharTok{\^{}}\NormalTok{(}\SpecialCharTok{{-}}\DecValTok{6}\NormalTok{), }\DecValTok{10}\SpecialCharTok{\^{}}\NormalTok{(}\SpecialCharTok{{-}}\DecValTok{8}\NormalTok{), }\DecValTok{10}\SpecialCharTok{\^{}}\NormalTok{(}\SpecialCharTok{{-}}\DecValTok{8}\NormalTok{), }\DecValTok{10}\SpecialCharTok{\^{}}\NormalTok{(}\SpecialCharTok{{-}}\DecValTok{8}\NormalTok{)),}
    \AttributeTok{final =} \FunctionTok{find\_error\_shifted\_grid\_vector}\NormalTok{(M, n, delta1)}
\NormalTok{  )}
\end{Highlighting}
\end{Shaded}

\begin{Shaded}
\begin{Highlighting}[]
\NormalTok{table2 }\SpecialCharTok{|\textgreater{}} 
  \FunctionTok{kable}\NormalTok{(}\AttributeTok{booktabs =} \ConstantTok{TRUE}\NormalTok{, }\AttributeTok{digits =} \DecValTok{8}\NormalTok{) }\SpecialCharTok{|\textgreater{}} \FunctionTok{kable\_styling}\NormalTok{()}
\end{Highlighting}
\end{Shaded}

\begin{table}
\centering
\begin{tabular}{rrrr}
\toprule
M & n & delta1 & final\\
\midrule
1e+04 & 1000 & 1e-06 & 0.00014967\\
1e+04 & 10000 & 1e-06 & 0.00010491\\
1e+04 & 1000 & 1e-08 & 0.00015871\\
1e+04 & 100 & 1e-08 & 0.00068990\\
1e+05 & 1000 & 1e-08 & 0.00002826\\
\bottomrule
\end{tabular}
\end{table}

\end{document}